\RequirePackage{fix-cm}
\documentclass[smallextended]{svjour3}       
\smartqed  
\usepackage{graphicx}
\usepackage[utf8]{inputenc}
\usepackage[T1]{fontenc}
\usepackage{subcaption}
\captionsetup{compatibility=false}
\usepackage{url}
\usepackage[boxed,linesnumbered]{algorithm2e}
\usepackage{listings}
\lstset{basicstyle=\small\ttfamily,columns=fullflexible}

\usepackage{mathtools}
\usepackage{amsfonts}
\usepackage{amssymb}

\newcommand{\bcp}{\(BC+\)}
\newcommand{\cpasp}{\texttt{CPLUS2ASP}}

\newcommand{\iclingo}{\texttt{iClingo}}

\newcommand{\gup}{\emph{go up}}
\newcommand{\gdown}{\emph{go down}}
\newcommand{\gleft}{\emph{go left}}
\newcommand{\gright}{\emph{go right}}

\newcommand{\ASPRL}{ASP(RL)}
\newcommand{\ASPSARSA}{ASP(SARSA)}
\newcommand{\ASPQ}{ASP(Q-Learning)}

\usepackage{dsfont}
\renewcommand{\Re}{\mathds{R}}

\newcommand{\ssq}{\subseteq}

\newcommand{\set}{\mathcal}

\newcommand{\state}{s}
\newcommand{\fstate}{\state'}
\newcommand{\istate}{\state_0}
\newcommand{\gstate}{\state_g}
\newcommand{\action}{a}
\newcommand{\trans}{t}
\newcommand{\reward}{r}
\newcommand{\pol}{\pi}
\newcommand{\opt}{\ast}
\newcommand{\optpol}{\pol^{\opt}}
\newcommand{\timeinst}{t}
\newcommand{\traj}{T}

\newcommand{\lp}{\Pi}

\newcommand{\atom}{A}
\newcommand{\lit}{L}

\newcommand{\tr}{true}
\newcommand{\fl}{\mathit{false}}
\newcommand{\un}{unknown}

\newcommand{\MDP}{M}
\newcommand{\vaf}{Q}

\newcommand{\State}{\set{\MakeUppercase{\state}}}
\newcommand{\Action}{\set{\MakeUppercase{\action}}}
\newcommand{\Trans}{\set{\MakeUppercase{\trans}}}
\newcommand{\Reward}{\set{\MakeUppercase{\reward}}}
\newcommand{\Traj}{\set{H}}
\newcommand{\Empty}{\varnothing}

\newcommand{\gen}{\dot}
\newcommand{\genMDP}{\gen{\MDP}}
\newcommand{\genS}{\gen{\State}}
\newcommand{\genA}{\gen{\Action}}
\newcommand{\genT}{\gen{\Trans}}
\newcommand{\genR}{\gen{\Reward}}

\newcommand{\forb}{\overline}
\newcommand{\forbS}{\forb{\State}}
\newcommand{\forbA}{\forb{\Action}}

\newcommand{\forbQ}{\forb{\set{\vaf}(\state,\action)}}

\newcommand{\MDPdef}{\MDP = \langle \State, \Action, \Trans, \Reward\rangle}
\newcommand{\genMDPdef}{\genMDP = \langle \genS, \genA, \genT, \Reward \rangle}

\newcommand{\f}{\mathrm{F}}
\newcommand{\g}{\mathrm{G}}
\newcommand{\h}{\mathrm{H}}
\newcommand{\Desc}{\mathcal{D}}

\begin{document}

\title{Answer Set Programming for Non-Stationary Markov Decision Processes}


\author{Leonardo A. Ferreira \and Reinaldo A. C. Bianchi \and Paulo E. Santos \and Ramon Lopez de Mantaras}

\institute{Leonardo Anjoletto Ferreira \at
            Universidade Metodista de São Paulo, Rua Alfeu Tavares, 149, São Bernardo do Campo, São Paulo, Brazil.
            \email{leonardo.ferreira@metodista.br}
           \and
            Reinaldo Augusto da Costa Bianchi \at
            Centro Universitário FEI, Av. Humberto de Alencar Castelo Branco, 3972, São Bernardo do Campo, São Paulo, Brazil.
            \email{rbianchi@fei.edu.br}
            \and
            Paulo Eduardo Santos \at
            Centro Universitário FEI, Av. Humberto de Alencar Castelo Branco, 3972, São Bernardo do Campo, São Paulo, Brazil.
            \email{psantos@fei.edu.br}
            \and
            Ramon Lopez de Mantaras \at
            Institut d'Investigació en Intelligència Artificial, 08193 Bellaterra, Catalonia, Spain
            \email{mantaras@iiia.csic.es}
}

\date{Received: date / Accepted: date}

\maketitle

\begin{abstract}
Non-stationary domains, where unforeseen changes happen, present a challenge for agents to find an optimal policy for a sequential decision making problem. This work investigates a solution to this problem that combines Markov Decision Processes (MDP) and  Reinforcement Learning (RL) with Answer Set Programming (ASP) in a method we call ASP(RL). In this method, Answer Set Programming is used  to find the possible trajectories of an MDP, from where Reinforcement Learning is applied to learn the optimal policy of the problem. Results show that ASP(RL) is capable of efficiently finding the optimal solution of an MDP representing non-stationary domains.

\keywords{Non-determinism \and Markov Decision Processes \and Answer Set Programming \and Action Languages}
\end{abstract}

\section{Introduction}\label{sec:intro}

John McCarthy defined Elaboration Tolerance as ``the ability to accept changes to a person's or a computer program's representation of facts about a subject without having to start all over''~\cite{McCarthy98}. An example of a real world problem that requires solutions that are tolerant to elaborations is the dynamics of urban mobility, where streets and roads are constantly reconstructed or modified. Some of these changes are planned and, thus, can be previously informed to the inhabitants of the city. However, unplanned changes due to natural phenomena (rain or snowing, for example), or due to human actions (e.g. road accidents), may occur that cause road blocks which could prevent the traffic through certain routes of the city. In such cases, it is not possible to know the changes until they are observed by the agents. However, an agent immersed in this domain must be capable of finding the best sequence of actions, considering the new situations, but without loosing all the information previously acquired.

One formalism that can be used to model the kind of situations described above is a non-stationary Markov Decision Process (MDP), where the set of states represented by observations of the environment (facts) can suffer changes over time such that states can be added to, or removed from, the decision process. As these changes may not be known {\em a priori}, the environment cannot be modelled as a stationary MDP due to the Curse of Dimensionality~\cite{Bellman-Dreyfus1962}, which describes the growth in the set of states when considering the number of variables involved in the description of a state.

This work is directed towards problem solving in non-stationary domains in which, not only the transition and the reward functions change, but also the states and actions may change during the agent's interaction with the environment. The \ASPRL{} proposed here is able to change an MDP's description during learning and to reuse the learnt data in the new domain that it is interacting with. A consequence of using \ASPRL{} is the speed up in the searching for an MDP solution as a consequence of the reduction that may occur in the search space.

In order to model an agent capable of interacting efficiently with non-stationary domains, we propose a method called \ASPRL{} that combines Markov Decision Process, Reinforcement Learning (RL) (Section~\ref{sec:sdm}) with Answer Set Programming (Section~\ref{sec:asp}). The proposed combination (Section~\ref{sec:prop}) allows an agent to learn incrementally in an environment that suffers changes. The method was analysed in a non-stationary grid world (Section~\ref{sec:exp}) and experimentally evaluated and compared to two Reinforcement Learning algorithms (Section~\ref{sec:res}).

\section{Background}\label{sec:theo}

This section introduces Markov Decision Processes (MDP), Reinforcement Learning (RL) and Answer Set Programming (ASP), which constitute the foundations of this work.

\subsection{MDP and Reinforcement Learning}\label{sec:sdm}

In a Sequential Decision Making Problem, an agent must select a series of actions in order to find a solution to a given problem. A feasible solution, known as policy (\(\pi\)), is a sequence of non-deterministic actions that leads the agent from an initial state to a goal state~\cite{Bellman1952,Bellman-Dreyfus1962}. A problem such as this may have more than one feasible solution, thus it is possible to use the Bellman's Principle of Optimality~\cite{Bellman1952,Bellman-Dreyfus1962} as a criterion to define which of the feasible policies can be considered as the optimal policy (\(\pi^\ast\)). Bellman's Principle of Optimality states that ``{\em an optimal policy has the property that whatever the initial state and initial decision are, the remaining decisions must constitute an optimal policy with regard to the state resulting from the first decision}''~\cite{Bellman-Dreyfus1962}. By this definition, an optimal policy is the one that maximises (or minimises) a desired reward/cost function.

Markov Decision Process (MDP)~\cite{Bellman1957} can be used to formalise Sequential Decision Making Problems. An MDP is defined as a tuple \(\MDP = \langle \State, \Action, \Trans, \Reward \rangle\) where:
\begin{itemize}
    \item \(\State\) is the set of states at any time step;
    \item \(\Action\) is the set of allowed actions in the states \(\state \in \State\);
    \item \(\Trans : \State \times \Action \times \State \mapsto [0,1]\) is the transition function that gives the probability of reaching the future state \(\fstate \in \State\) by performing action \(\action \in \Action\) in the current state \(\state \in \State\); 
    \item \(\Reward : \State \times \Action \times \State \mapsto \Re\) is the reward function that returns a real value for reaching a state \(\fstate \in \State\) after performing an action \(\action \in \Action\) in a state \(\state \in \State\).
\end{itemize}

To find the optimal solution of an MDP is to find, for each state, which  is the action that maximises the reward function. One of the methods that can be used to approximate such optimal solution is Reinforcement Learning (RL). With RL, at each time step, a learning agent at a state \(\state \in \State\) chooses an action \(\action \in \Action\) to be performed in the environment. After the action \(\action\) is performed, the agent receives its new state \(\fstate \in \State\) and a reward \(\reward(\state,\action,\fstate)\). This reward is used to update a value function \(V(\state)\) (or an action-value function \(\vaf(\state,\action)\), depending on the method used) and the interaction continues from the new state. Given enough time, the agent is capable to approximate the (action-) value function, maximising the reward function and finding the optimal policy. One important aspect of RL methods is that the transition and reward function are not necessarily known beforehand by the agent, but are present in the environment.

Two well-known methods of RL are SARSA~\cite{Sutton-Barto15} and Q-Learning~\cite{Watkins,Sutton-Barto15}. Both are based on the concept of updating an (action-) value function considering the observations received from the environment. The main difference between them is how this update is accomplished. SARSA is an {\em on-policy} method, which means that updates in the \(\vaf(\state,\action)\) function use the actions executed in the policy that is being followed, while Q-Learning is an {\em off-policy} method that uses the maximum value of the next state to update the current state-action pairs.

Although Reinforcement Learning allows for learning the optimal solution of a sequential decision-making problem with non-stationary transition and reward functions (functions that may change over time) and without the knowledge of the reward function, it still needs stationary sets (which do not change during the interaction) of states and actions in order to proceed with the learning process. In order to account for changes in the set of states, we propose the use of Answer Set Programming.

\subsection{Answer Set Programming}\label{sec:asp} 

Answer Set Programming (ASP) is a declarative non-monotonic logic programming language that has been used with great success to describe and provide solutions for NP-complete problems, such as planning and scheduling~\cite{khandelwal2014,yang2014}. Furthermore, ASP can be used for problems with large search space, such as the Reaction Control System of a Space Shuttle~\cite{RCS,RCS1,RCS2,RCS3}. 

An ASP program is a set of rules,  each rule is composed of an atom \(\atom\) and of literals \(\lit_m\), which are atoms or negated atoms. An ASP rule can be represented as:
    $\atom \gets \lit_1, \ldots, \lit_n \mathrm{;}$
where \(\atom\) is called the head of the rule and the conjunction of literals \(\lit_1,\ldots,\lit_n\) is its body. A rule is said to be positive when there is no negated atom in its body; when \(n=0\) the atom \(\atom\) is said to be a fact.

Let \(\lp\) be an ASP program, an answer set of \(\lp\) is an interpretation that makes all the rules of this program true. This interpretation is a minimal model of the program. One important aspect of ASP is its non-monotonic semantics (based on the Stable Model Semantics \cite{Gelfond88}), which respects the rationality principle that states that ``{\em one shall not believe anything one is not forced to believe}'' \cite{Gelfond88}. Along with \(\tr\) and \(\fl\), ASP also has a third truth value for \(\un\). 

There are two types of negation in ASP: strong (or ``classical'') and weak, which in ASP represents {\em negation as failure}~\cite{Lifschitz1999}.

Given an ASP program \(\lp\) and a set \(M\) of atoms of \(\lp\), a reduct program \(\lp_M\) is obtained from \(\lp\) by~\cite{Gelfond88}:
\begin{itemize}
    \item Deleting each rule  with a negative literal in its body in the form \(\lnot B, B \in M\);
    \item Deleting every negative literal in the body of remaining rules.
\end{itemize}

Thus, the reduct program \(\lp_M\) is negation-free and has a unique minimal Herbrand model. If \(M\) coincides with this model for \(\lp_M\), then \(M\) is a stable model of \(\lp\). Furthermore, by using an operator \(O_\lp\) defined as ``for any set of atoms \(M\) of \(\lp\), \(O_\lp(M)\) is the minimal Herbrand model of \(\lp_M\)'', then a stable model can also be described as the fixed points of \(O_\lp\). From this definition, a minimal model that accepts classical negation is called an answer set instead of a stable model.

Although ASP does not provide syntax to describe non-deterministic events, it is possible to use choice rules in order to verify each possible outcome of a choice. Considering for example that an agent is at a state \texttt{s0} and chooses to perform action \texttt{a} with the possible outcomes being the future states \texttt{s1}, \texttt{s2} and \texttt{s3}, this transition can be encoded using \texttt{ ``1 \{ s1, s2, s3 \} 1 :-  s0, a.''} in an ASP program. Thus, when \texttt{s0} and \texttt{a} are true in \(\lp\) (the agent has performed the action \texttt{a} in the state \texttt{s0}), only one of the future states \texttt{s1}, \texttt{s2} or \texttt{s3} is true (reached by the agent).

Since ASP can be used as a tool for providing reasoning and knowledge revision on a set of states and Reinforcement Learning allows for learning the solution of an MDP without the need of an explicit reward function, an opportunity arises to combine both methods in order to efficiently find the optimal policies for domains where unforeseen changes occur. The next section presents the action language \bcp{} that provides the appropriate definitions for domain modelling needed to bridge the gap between ASP and RL.

\subsection{The Action Language \bcp{}}

The action language \bcp{} is defined over the stable model semantics and allows for some useful ASP constructs, such as a high-level description of actions and their effects, as a consequence of its structured abstract representation of transition systems~\cite{BCplus2015}.

\bcp{} has two sets of symbols: action constants and fluent constants; and also two sets of formulas: fluent formula, which has only fluent constants, and action formula, which has at least one action constant and no fluent constant.

In \bcp{}, an action description is a set of causal laws that have two forms. The first is:
\begin{align}
    caused \; \f \; \mathit{if} \; \g
    \label{eq:law}
\end{align}
where, \(\f\) and \(\g\) are formulas. If \(\f\) and \(\g\) are both fluent formulas, then Formula~\ref{eq:law} is a static law. If \(\f\) is an action formula, but \(\g\) is a fluent formula, then Formula~\ref{eq:law} is an action dynamic law. The second form is called fluent dynamic law and has the form:
\begin{align}
    caused \; \f \;\mathit{if} \; \g \; \mathit{after} \; \h
    \label{eq:fdl}
\end{align}
where \(\h\) is a formula, \(\f\) and \(\g\) are fluent formulas and \(\f\) does not contain statically determined constants.

Causal dependencies between fluents in the same state are described by static laws. Direct effects of actions are represented by fluent dynamic laws, while causal dependencies between concurrently executed actions are expressed by action dynamic laws.

Given an action description \(\Desc\) expressed in \bcp{}, a stable model for the sequence \(PF_m(\Desc)\) of propositional formulae describes a path of length \(m\) in a transition system \(\Desc\)~\cite{BCplus2015}. Given a time instant \(i \in \{0, \ldots, m\}\), a translation \(PF_m(\Desc)\) is a conjunction of:
\begin{itemize}
    \item \(i : \f \gets i : \g\) for every static and atomic law in \(\Desc\) and \(\forall i \in \{0, \ldots, m - 1\}\);
    \item \(i + 1 : \f \gets (i + 1 : \g) \land (i : \h)\) for every fluent dynamic law in \(\Desc\) and \(\forall i \in \{0, \ldots, m - 1\}\);
    \item \(\{0 : c = v\}\) for every regular fluent constant \(c\) and every \(v \in Dom(c)\);
    \item Given \(\{v_1, \ldots, v_m\}\) as \(Dom(c)\), \(\bot \gets \lnot(1 \leq \{i : c = v_1, \ldots, i : c = v_m\} \leq 1)\) for every \(i : c\) representing the uniqueness of names and existence values for the constants;
\end{itemize}

The action language \bcp{} can be directly translated into an ASP program for providing sequences of actions as answer sets.

\section{Combining ASP and MDP}\label{sec:prop}

This section presents the main contribution of this work, the \ASPRL{} method, which is a combination of ASP and MDP for solving non-stationary decision making problems.

\subsection{Finding the Set of States} 

In this work  Answer Set Programs, translated from \bcp{}, represent the states \(\state \in \State\), the actions \(\action \in \Action\), and the expected transition function of an MDP, along with sets \(\State_0 \in \State\) and \(\State_g \in \State\) which represents the sets of initial states and goal states respectively. Let \(\lp(\State, \Action)\) be one such ASP program with \(\State\) and \(\Action\) as set of states and actions respectively. Given an initial state \(\istate \in \State_0\) and a goal state \(\gstate \in \State_g\), an answer set of \(\lp(\State, \Action)\) represents a trajectory \(\traj\) of the form: 
\begin{align} 
\traj = \langle \langle \istate, \action_{0}, \state_{1} \rangle, \langle  
\state_{1}, \action_{1}, \state_{2} \rangle, \ldots, \langle \state_{n}, 
\action_{n}, \gstate \rangle \rangle 
\end{align}
where \(\state_{n}\) and \(\action_{n}\) are, respectively, the state and the action at time \(n\).

As ASP programs can have more than one answer set, let a set \(\Traj\) contain all trajectories \(\traj\) that represent the sequence of actions leading from an initial state to a goal state. Thus, in the set of trajectories \(\Traj\) there are a set of states visited and a set of actions performed that are subsets of those sets in the MDP defined in the logic program \(\lp(\State, \Action)\). Thus, this set \(\Traj\) can be used to describe a new MDP \(\genMDP\), as stated in the following Lemma.
\begin{lemma}\label{lemma:traj}
     Given an MDP \(\MDPdef\) described by a logic program \(\lp(\State,\Action)\), the set \(\Traj\) of trajectories found for \(\lp(\State,\Action)\) defines \(\genMDPdef\), such that \(\genMDP \ssq \MDP\). Considering that \(\genMDP \ssq \MDP\) iff \(\genS \ssq \State\) or \(\genA \ssq \Action\) or \(\genT \ssq \Trans\) or \(\genR \ssq \Reward\).
\end{lemma}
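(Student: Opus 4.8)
The plan is to construct the four components of $\genMDP$ explicitly from the set $\Traj$ and then verify the defining disjunction, in fact establishing the stronger fact that \emph{every} component of $\genMDP$ is a subset of the corresponding component of $\MDP$. First I would fix the extraction: for the set $\Traj$ of trajectories returned for $\lpsa$, let $\genS$ be the collection of all states occurring as some $\state_i$ in some triple of some trajectory $\traj \in \Traj$, let $\genA$ be the collection of all actions occurring as some $\action_i$ in those triples, and let $\genT$ be the set of triples $\langle \state, \action, \fstate \rangle$ that actually appear in some trajectory (equivalently, the restriction of $\Trans$ to those triples). The reward function is carried over unchanged, so $\genR = \Reward$ by construction.

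The core of the argument is that, by the semantics of $\lpsa$ as an encoding of $\MDP$, no trajectory can mention an object that is not already part of $\MDP$. I would make this precise as follows. An answer set of $\lpsa$ is, by the construction of this section, a path from an initial state $\istate \in \State_0$ to a goal state $\gstate \in \State_g$; every atom representing a state in that answer set is an instance of a state symbol of $\State$, and every atom representing an action is an instance of an action symbol of $\Action$. Taking the union over all trajectories in $\Traj$ therefore cannot escape $\State$ or $\Action$, which gives $\genS \ssq \State$ and $\genA \ssq \Action$ directly. For the transition component I would read $\Trans$ as the set of triples to which it assigns positive probability (its support); a triple $\langle \state, \action, \fstate \rangle$ can occur in a trajectory only if the corresponding transition is licensed by the choice rules translated from \bcp{}, i.e. only if $\Trans(\state,\action,\fstate) > 0$, whence $\genT \ssq \Trans$. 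Since $\genR = \Reward$ trivially satisfies $\genR \ssq \Reward$, the defining disjunction holds, and indeed each disjunct holds, so $\genMDP \ssq \MDP$.

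The step I expect to be the main obstacle is not the counting but the bookkeeping that makes the subset relation on the functional components meaningful: $\Trans$ and $\Reward$ are functions, so $\genT \ssq \Trans$ and $\genR \ssq \Reward$ make sense only once these functions are identified with sets, namely their graphs or, for $\Trans$, its positive-probability support. I would therefore state this identification explicitly at the outset and then verify that the extraction from $\Traj$ introduces no triple outside that set. The remaining care is to confirm that the encoding $\lpsa$ is faithful, in the sense that its answer sets introduce no fresh states or actions beyond $\State$ and $\Action$; this is exactly what the translation $PF_m(\Desc)$ of Section~\ref{sec:theo} guarantees, since its choice atoms range only over $Dom(c)$ for the fluent and action constants of the described transition system.
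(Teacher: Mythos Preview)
Your argument is correct, and it reaches the conclusion by a genuinely different route from the paper. You build $\genMDP$ \emph{bottom-up}: collect every state, action, and transition triple that occurs in some trajectory of $\Traj$, then argue that the faithfulness of the $PF_m(\Desc)$ translation prevents any symbol outside $\State$, $\Action$, or the support of $\Trans$ from appearing. The paper instead proceeds \emph{top-down}: it views $\lpsa$ as imposing a set of restrictions on $\MDP$, introduces explicit forbidden sets $\forbS$ and $\forbA$ of states and actions excluded by the program, and obtains $\genS = \State - \forbS$, $\genA = \Action - \forbA$; it then writes out $\genT$ by subtracting from $\genS \times \genA$ five separately enumerated classes of disallowed pairs (forbidden states, forbidden actions, their combination, zero-probability transitions, and specifically blocked $(\state,\action)$ pairs).

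What each approach buys: yours is shorter and more careful on the semantic side---you make explicit the identification of $\Trans$ with its positive-probability support so that ``$\genT \ssq \Trans$'' is meaningful, a point the paper leaves implicit. The paper's subtractive formulation, on the other hand, yields an explicit closed-form description of $\genT$ in terms of the restriction sets, which is operationally useful later when the algorithm must update the action-value table after an environment change (Section~\ref{sec:3.2}): knowing \emph{which} state-action pairs were removed, not merely that the result is a subset, is what drives that update. Both arguments leave $\Reward$ untouched, matching the definition of $\genMDP$.
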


\begin{proof}(Sketch)
A logic program \(\lp(\State,\Action)\) defines a set of restrictions on an MDP. These restrictions are a set \(\forbS\) of states that the agent may not be able to visit and a set \(\forbA\) of actions that the agent may not be able to perform. Also, changing actions or states imply changing the transitions as well. Thus, \(\genS \ssq \State \: | \: \genS = \State - \forbS\) and \(\genA \ssq \Action \: | \: \genA = \Action - \forbA\).

The transition function \(\genT\) is then described considering the following conditions: 
\begin{enumerate}
    \item The agent cannot visit a state that is forbidden: \(\forbS \times \Action\);
    \item The agent cannot perform a forbidden action: \(\State \times \forbA\);
    \item The agent cannot perform a forbidden action in a state that it cannot visit: \(\forbS \times \forbA\);
    \item The agent cannot visit states that have no transition probabilities: \(\State \times \Action \times \State \mapsto 0\);
    \item The agent is not allowed to perform some specific actions in some specific states: \(\forbQ\).
\end{enumerate}

Thus, the transition function that is extracted from the answer sets is defined as: 
\begin{align}
\genT(\genS, \genA) =& (\genS \times \genA) - \left( (\forbS \times \Action) + 
                       (\State \times \forbA) + (\forbS \times\forbA) \right. 
                       \nonumber \\
                     & \left. + (\State \times \Action \times \State \mapsto 0) 
                       + (\forbQ) \right) \mapsto \genS
\end{align}

    When an MDP is not deterministic, choice rules are used to describe the transition possibilities (without the probability itself), a similar process is used to find the transition function \(\genT\).

Therefore, with this new set of states \(\genS \ssq \State\), actions \(\genA \ssq \Action\) and transition function \(\genT\), it is possible to formalise an MDP \(\genMDP \ssq \MDP\) in the form \(\genMDPdef\). Since the reward comes from the interaction with the environment, there is no need to suppress any value in this function or even to know which is the reward function beforehand.
\end{proof}

Once it is possible to formalize an MDP \(\genMDP\) that is a subset of another MDP \(\MDP\), it is still necessary to guarantee that the optimal solution \(\optpol_{\genMDP}\) of \(\genMDP\) is the optimal solution of \(\optpol_\MDP\) of \(\MDP\) as stated in Theorem~\ref{theorem:mdp}.

\begin{theorem}\label{theorem:mdp} 
    Given a reward function and an evaluation criteria (i.e. maximasing or minimising rewards), the optimal solution \(\optpol_{\genMDP}\) for the MDP \(\genMDP \ssq \MDP\) is equivalent to the optimal solution \(\optpol_\MDP\) for the MDP \(\MDP\) given the answer sets (trajectories) \(\Traj\) found as solutions to the logic program \(\lp(\State,\Action)\) that represents \(\MDP\).
\end{theorem}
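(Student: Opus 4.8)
The plan is to prove the equivalence by showing that the two processes assign the same optimal value to every state of the shared state set $\genS$, and therefore induce the same greedy (optimal) policy on those states. The starting point is the content of Lemma~\ref{lemma:traj}: $\genMDP$ is obtained from $\MDP$ purely by deleting states in $\forbS$, actions in $\forbA$, and the associated transitions, all of which appear in \emph{no} trajectory from an initial state $\istate \in \State_0$ to a goal state $\gstate \in \State_g$, while the reward function $\Reward$ is carried over unchanged. First I would invoke Bellman's Principle of Optimality, stated earlier, so that any sub-trajectory of an optimal policy is itself optimal from the state at which it departs; this is what licenses a purely state-wise comparison of the two value functions.

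Fixing a maximising criterion (the minimising case being symmetric), let $V^{\opt}_{\MDP}$ and $V^{\opt}_{\genMDP}$ denote the optimal value functions of the two MDPs; the target is $V^{\opt}_{\genMDP}(\state) = V^{\opt}_{\MDP}(\state)$ for every $\state \in \genS$. The inequality $V^{\opt}_{\genMDP}(\state) \le V^{\opt}_{\MDP}(\state)$ is immediate: since $\genMDP \ssq \MDP$, any policy admissible in $\genMDP$ is also admissible in $\MDP$ and induces the same cumulative reward (the reward function and the surviving transitions coincide), so taking the optimum over a smaller set of policies can only lower it.

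The substantive direction is $V^{\opt}_{\genMDP}(\state) \ge V^{\opt}_{\MDP}(\state)$, which I would obtain by arguing that $\optpol_\MDP$ remains feasible inside $\genMDP$, i.e. that it never prescribes a forbidden action nor visits a forbidden state. The key step: a forbidden state lies on no initial-to-goal trajectory and hence cannot reach $\gstate$, whereas an optimal goal-directed policy generates a trajectory terminating in $\gstate$. Such a trajectory is, by definition, a member of $\Traj$, so all of its states and actions already lie in $\genS$ and $\genA$. Consequently $\optpol_\MDP$, restricted to $\genS$, is a well-defined policy of $\genMDP$ attaining value $V^{\opt}_{\MDP}(\state)$, which yields the reverse inequality. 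Equality of the two optimal value functions on $\genS$ then forces the induced greedy policies to agree, establishing $\optpol_{\genMDP} \equiv \optpol_\MDP$.

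The step I expect to be the main obstacle is the non-deterministic case, where a transition is encoded by a choice rule: an action taken at a state of $\genS$ might \emph{a priori} carry positive probability of landing in a state slated for deletion, so that pruning it would perturb the transition dynamics rather than leave the optimal policy intact. Here I would rely on the property, used in Lemma~\ref{lemma:traj}, that the ASP encoding enumerates \emph{every} possible outcome of each choice; hence any state reachable with positive probability under some admissible action itself occurs in a trajectory and is therefore \emph{not} forbidden, and only genuinely unreachable outcomes (those mapped to probability $0$) are removed. Making this precise — that $\genT$ agrees with $\Trans$ on every state-action pair lying along an optimal trajectory, so that the Bellman fixed point is unchanged on $\genS$ — is the crux on which the whole equivalence rests.
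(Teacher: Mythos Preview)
Your argument is correct in spirit and arrives at the same conclusion, but it takes a genuinely different route from the paper. The paper's proof is a short trajectory-level argument: since \(\genMDP\) is built precisely from the answer sets \(\Traj\) of \(\lpsa\), the set of feasible trajectories of \(\genMDP\) coincides with \(\Traj\), which is also the set of feasible trajectories of \(\MDP\) (those reaching a goal under the encoded restrictions); both MDPs therefore optimise the same reward over the \emph{same} feasible set, so their optima agree. No value functions, no Bellman fixed points, no two-sided inequality are invoked. Your approach instead works at the level of state value functions, establishing \(V^{\opt}_{\genMDP}=V^{\opt}_{\MDP}\) on \(\genS\) via a sandwich argument and then reading off equality of the greedy policies. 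What your route buys is a more explicit treatment of the dynamic-programming structure and, importantly, an honest identification of the delicate point in the stochastic case (that pruning must not alter transition probabilities along surviving state--action pairs), which the paper's proof leaves implicit. What the paper's route buys is brevity: once one accepts that \(\Traj\) is exactly the common feasible set, optimality over identical candidates under an identical criterion is immediate, and the non-deterministic subtlety is absorbed into Lemma~\ref{lemma:traj} rather than surfacing in the theorem's proof.
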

\begin{proof}
Both \(\MDP\) and \(\genMDP\) have to maximise (or minimise) the same reward function \(\Reward\). If there is no restrictions in the set of states (\(\forbS = \Empty\)) and actions (\(\forbA = \Empty\)), we have that \(\genMDP = \MDP\) and \(\optpol_\MDP = \optpol_{\genMDP}\).

If there are restrictions represented in \(\lp(\State,\Action)\), then \(\genMDP \subset \MDP\) and the feasible solutions (answer sets) \(\Traj\) for \(\MDP\) are the same of those for \(\genMDP\) (by using Lemma~\ref{lemma:traj}). Since the optimal solution must be a feasible solution, then \(\optpol_\MDP \in \Traj\) and \(\optpol_{\genMDP} \in \Traj\). Thus, given the same set of feasible solutions and the same evaluation criteria, \(\optpol_\MDP = \optpol_{\genMDP}\).
\end{proof}

\subsection{The Algorithm \ASPRL{}}\label{sec:3.2}

Lemma~\ref{lemma:traj} and Theorem~\ref{theorem:mdp} support the use of ASP to find the sets of states and actions of an MDP. By using RL it is possible to find an optimal stochastic solution to this MDP. Since ASP allows for revisions to be made in the set of states and actions, if it is the case that the environment changes at any time step, it can be used to find the new subsets \(\genS\) of states and \(\genA\) of actions of the modified MDP and values learnt from the previous interaction can be used as input for this new MDP. Algorithm~\ref{alg:asprl} is the pseudocode of \ASPRL{}, that uses the non-monotonicity of ASP along with the exploratory nature of RL algorithms in stochastic domains. 
\begin{algorithm}[t]
\TitleOfAlgo{\ASPRL{}}
\KwIn{An MDP descried as a logic program \(\lp(\State,\Action)\) and a
(optional) \(\vaf(\state,\action)\) function to be approximated.}
\KwOut{The approximated \(\vaf(\state,\action)\) function.}
\DontPrintSemicolon
Find the answer sets \(\Traj\) for \(\lp(\State,\Action)\).\;
Update \(\vaf(\state,\action)\) function using \(\genS\) and \(\genA\) found in \(\Traj\).\;
\While{the environment does not change}{Approximate \(\vaf(\state,\action)\) using a RL method.\;}
Include the observed changes in \(\lp(\State,\Action)\) \;
Call \ASPRL{} with \(\lp(\State,\Action)\) and the \(\vaf(\state,\action)\) function approximated.
\caption{\ASPRL{} Algorithm.}
\label{alg:asprl}
\end{algorithm}

Algorithm~\ref{alg:asprl} uses RL methods for approximating the \(\vaf(\state,\action)\) function for the states and actions obtained by ASP. First, the domain is described as a logic program \(\lp(\State,\Action)\), using the \bcp{} vocabulary, and answer sets are found for it. From those answer sets (as shown in Lemma~\ref{lemma:traj}) the sets of states \(\genS\) and actions \(\genA\) are constructed for the MDP that will be used by the agent to interact with the environment, along with the transition function \(\genT\). Once the MDP \(\genMDPdef\) is formalised, the interaction with the environment and the search for the optimal solution begins by using any RL algorithm. This interaction continues until a change in the environment happens. At this instant, the algorithm returns the approximated \(\vaf(\state,\action)\).

The algorithm works in non-stationary environments by including the observed environment changes in \(\lp(\State,\Action)\) so that ASP can be used again to find the new sets of states and actions along with the transition function. Since there is a \(\vaf(\state,\action)\) function approximated from the previous interaction, modifications are performed in it. The state-action pairs that are in the new set of answer sets are added to the action-value function and the pairs that are not in this set are removed. The state-action pairs that were in the function, and that are also in the answer set, remain in the action-value function with the previously learned value. Therefore, the interaction with a changing environment is done by calling \ASPRL{} with \(\lp(\State,\Action)\), augmented with the observed changes, and the action-value function returned by the previous call.

\section{Experiments}\label{sec:exp}

Experiments were performed in a non-deterministic non-stationary grid world of size $10\times 10$ which allowed the execution of only one of four actions each time: \gup{}, \gdown{}, \gleft{} and \gright{}. The probabilities for the environment were defined as 80\% for the  transition to happen as expected (e.g., executing \gup{} makes the agent go up with 80\% of probability) and a 20\% chance for the agent to go orthogonal to the desired direction (e.g., executing \gup{} may make the agent to go left or right with 10\% of chance for each side).

The grid world may have walls (W) and holes (H) each of which occupies a single cell of the grid. When the agent performs an action and hits a wall, it stays in the same state; when it executes an action and falls into a hole, the episode ends. In this domain, an agent that starts in the lowermost, leftmost, cell has as a goal to reach the topmost, rightmost, cell. The reward function used in this domain is \(+100\) for reaching the goal, \(-100\) for falling in a hole and \(-1\) in any other event. It is important to notice that the transition function and reward function are unknown to the agent. For this grid world, the representation used by the agent is the value of its position in X and Y. These values are not treated by the agent as an X by Y matrix, but as a set of atoms in the form \((X,Y)\) for each pair of X and Y values found in an answer set.

\begin{figure*}[t!]
\centering
\begin{subfigure}[t]{0.45\textwidth}
  \centering
  \includegraphics[width=\textwidth]{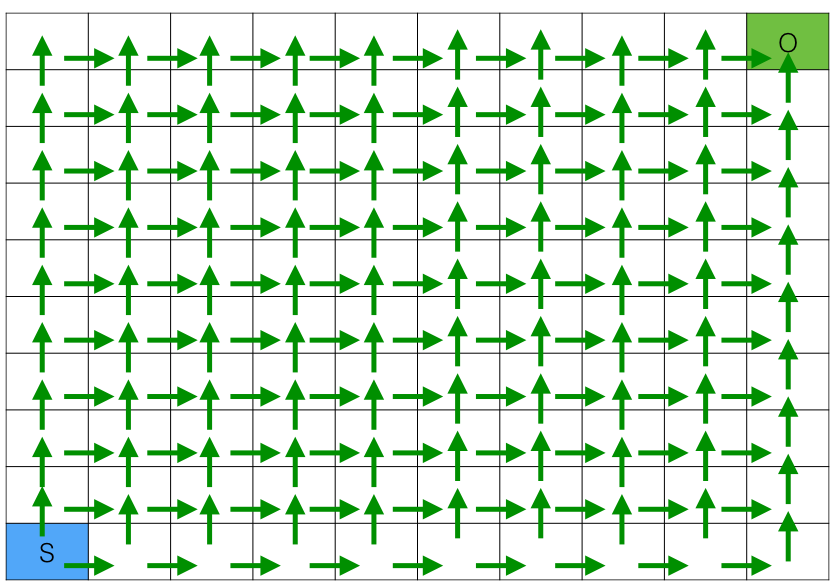}
  \caption{Map 1: Initial configuration.}
  \label{fig:map1}
\end{subfigure}
\hfill
\begin{subfigure}[t]{0.45\textwidth}
  \centering
  \includegraphics[width=\textwidth]{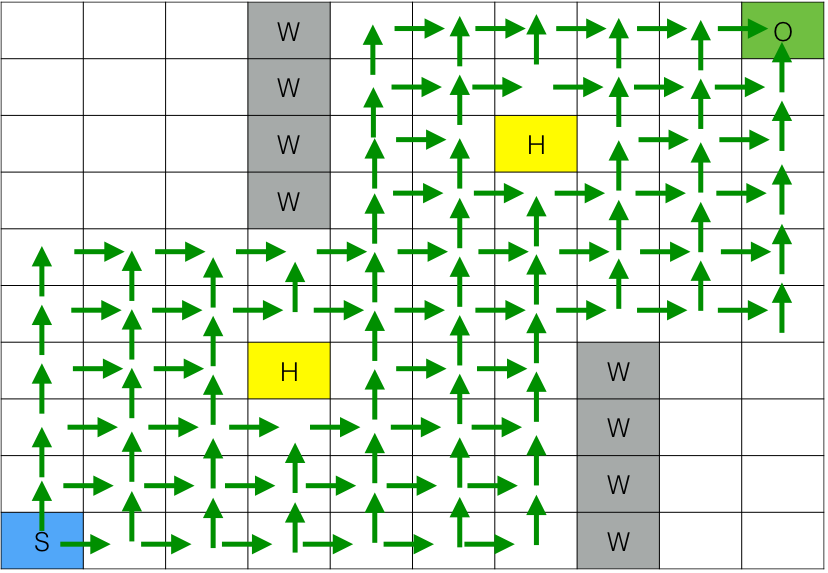}
  \caption{Map 2.}
  \label{fig:map2}
\end{subfigure}
\\
\begin{subfigure}[t]{0.45\textwidth}
  \centering
  \includegraphics[width=\textwidth]{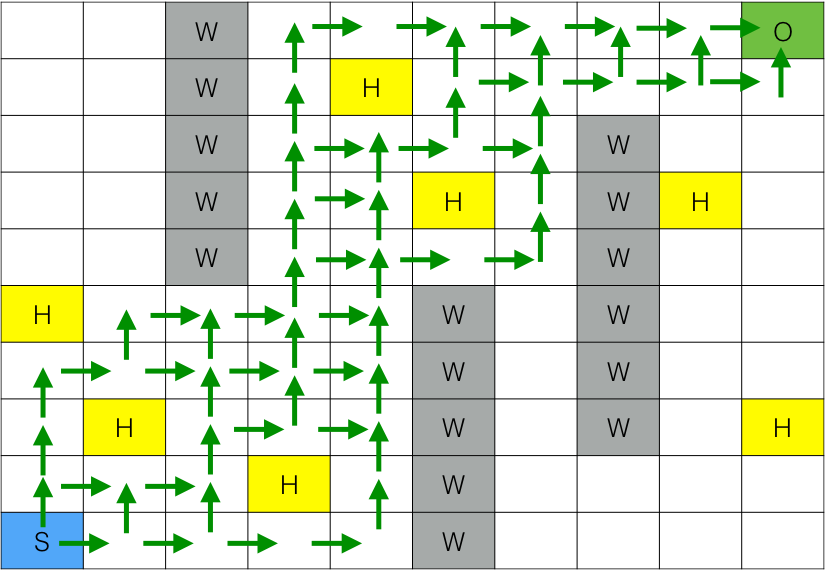}
  \caption{Map 3.}
  \label{fig:map3}
\end{subfigure}
\hfill
\begin{subfigure}[t]{0.45\textwidth}
  \centering
  \includegraphics[width=\textwidth]{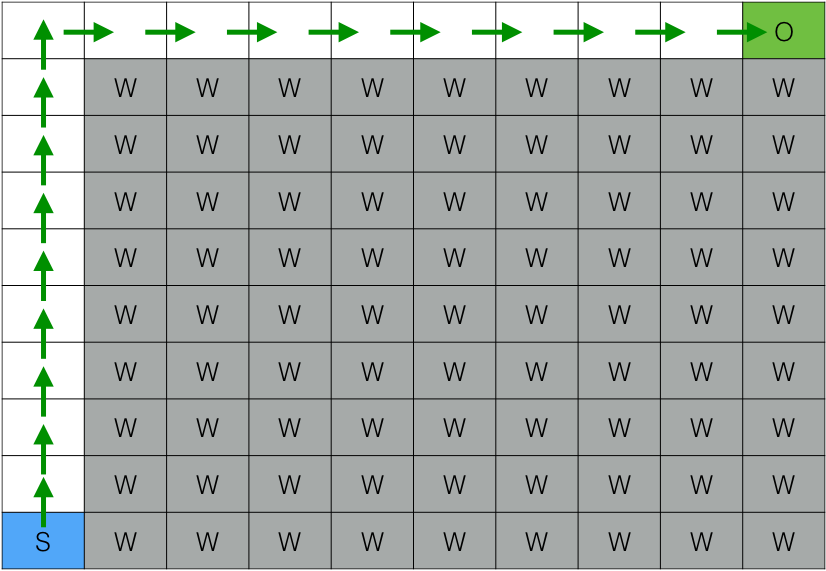}
  \caption{Map 4.}
  \label{fig:map4}
\end{subfigure}
\caption{Grid worlds used in the experiments. Squares labelled with W
  represent walls and with H, holes.}
\label{fig:gridworld}
\end{figure*}

This grid world suffers changes in a manner that is previously unknown to the learning agent. In this work, \ASPRL{} is evaluated in three distinct situations, in each of them the agent starts in the map shown in Figure~\ref{fig:map1} that, after 5000 episodes, changes to one of the other maps in Figure~\ref{fig:gridworld}. For this work, changes observed in the environment were manually entered in the logic program. Nevertheless, this can be automatically done by using an online method with ASP.

The map in Figure~\ref{fig:map1} represents a grid world with no walls or holes. In this case, any combination of actions that makes the agent to go {\em up} and {\em right} leads the agent to the goal. Figure~\ref{fig:map2} represents a grid world with two walls and two holes. Figure~\ref{fig:map3} shows a grid world containing more walls and holes than in the previous situation. In this case, the agent has fewer action options to achieve the goal state. Finally, Figure~\ref{fig:map4} represents a grid world in which there is only one policy for achieving the goal state with the minimum number of actions. Any other policy for this grid world will necessarily make the agent hit a wall before reaching the goal state.

The arrows in the maps shown in Figure~\ref{fig:gridworld} represent the feasible policies obtained by ASP with the minimum number of steps. Note that, these policies do not represent the transition probabilities of the environment.

In the first situation, the environment changes from the map in Figure~\ref{fig:map1} to that in Figure~\ref{fig:map2}. In this case, we can see that there is a reduction in the number of policies with the minimum number of steps.

In the second situation, the change occurs from the map shown in Figure~\ref{fig:map1} to that in Figure~\ref{fig:map3}. By analysing the arrows in the final (Fig.~\ref{fig:map3}) grid world, we can see that there is an even greater reduction in the number of policies than in the previous situation, since there are more walls and holes in this map, which imply fewer safe actions ({\em arrows}) available.

In the final situation the environment changes from the map in Figure~\ref{fig:map1} to the one in Figure~\ref{fig:map4}. In this case the MDP has only one optimal solution. This situation was chosen since the answer set provides the only optimal solution almost instantly, whereas in the case where an action-value function is approximated by RL (without using the answer sets), every possible action in every possible state is considered, leading to a costly search procedure. 

\section{Results}\label{sec:res}

In this section we use the situations described above to compare the learning processes of SARSA and Q-Learning with those of \ASPSARSA{} and \ASPQ{}, which are \ASPRL{} methods where SARSA and Q-Learning are used along with ASP. This comparison is accomplished with two different criteria: the return (\(\sum \reward(\state, \action, \fstate)\)) of the episode and the number of steps needed to reach the goal state and root-mean-square deviation (RMSD) of the action-value function, at time \(t\) wrt time \(t-1\), according to the equation \ref{eq5} below.
\begin{align}\label{eq5}
  \mathrm{RMSD} = \sqrt{\frac{\sum^n_{m=1}{\vaf{}_{m,\timeinst}(\state,\action) - \vaf{}_{m,\timeinst-1}(\state,\action)}}{n}}
\end{align}

\begin{figure*}[p!]
\centering
\begin{subfigure}[t]{0.85\textwidth}
	\centering
	\includegraphics[width=\columnwidth]{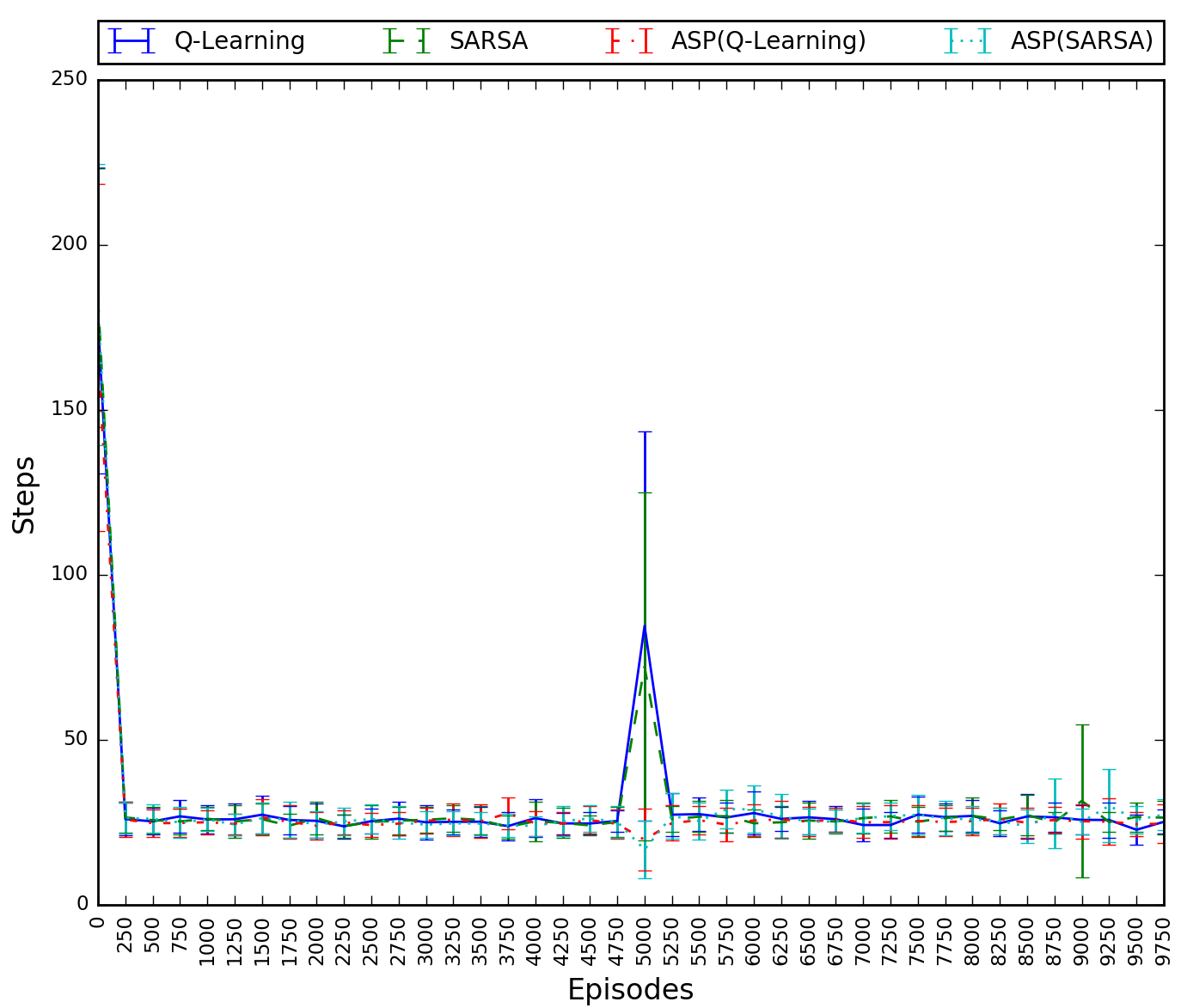}
	\caption{Steps needed to reach the goal state.}
	\label{fig:steps12c}
\end{subfigure}
\\
\begin{subfigure}[t]{0.85\textwidth}
	\centering
	\includegraphics[width=\columnwidth]{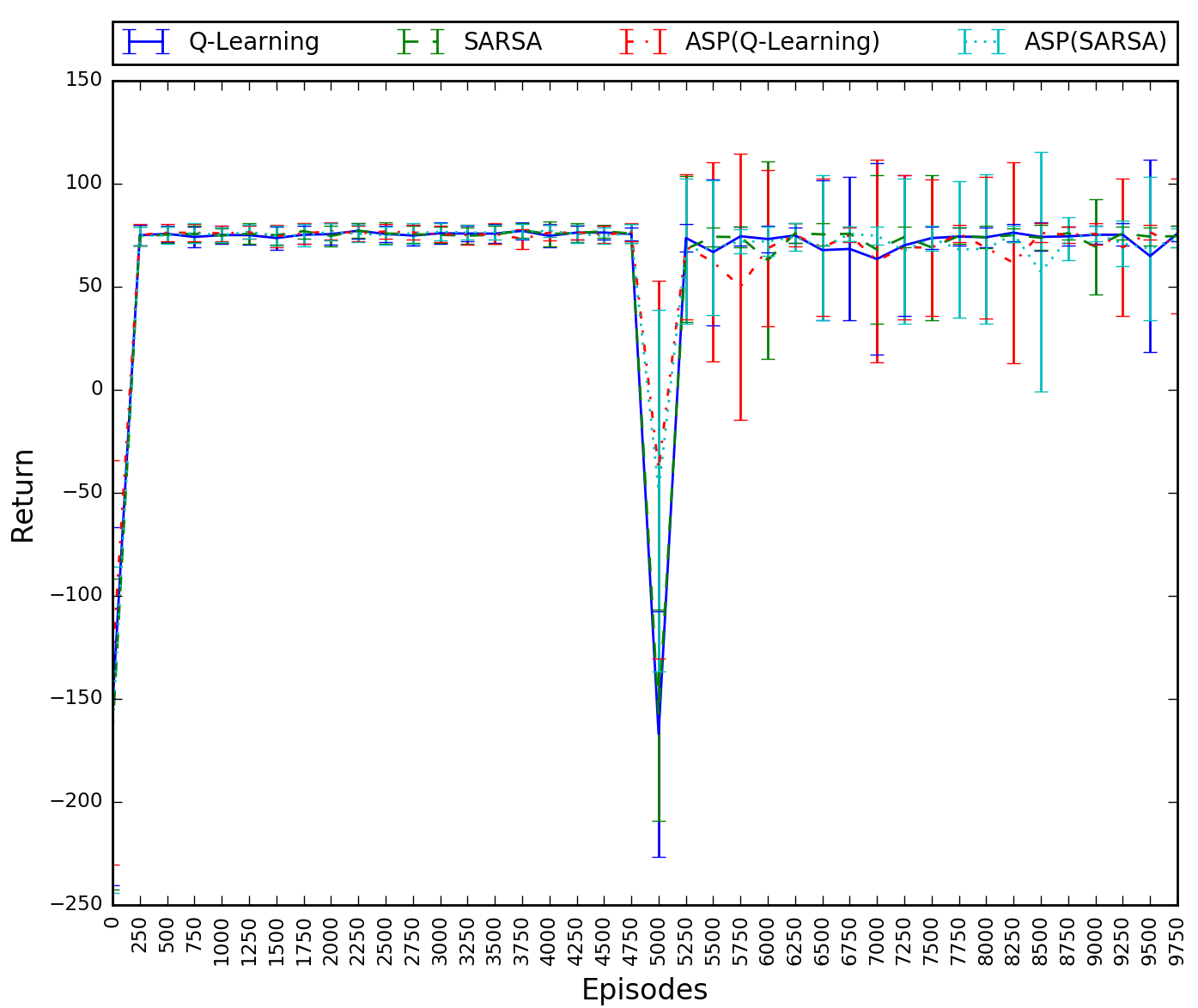}
	\caption{Total returns receive per episode.}
	\label{fig:ret12c}
\end{subfigure}
\caption{Results for the first situation for every episode.}
\label{fig:resexp1c}
\end{figure*}

\begin{figure*}[p!]
\centering
\begin{subfigure}[t]{0.85\textwidth}
	\centering
	\includegraphics[width=\columnwidth]{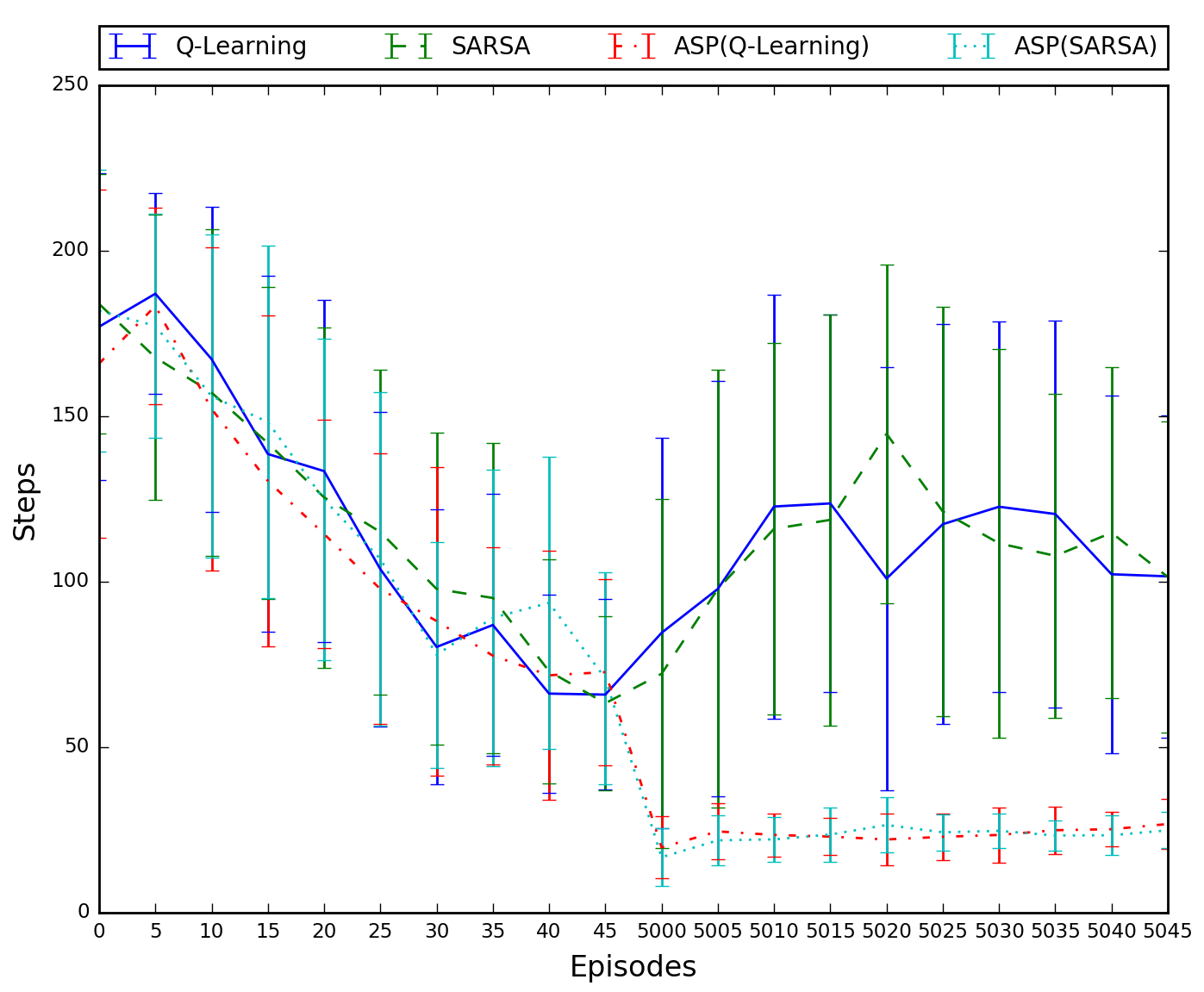}
	\caption{Steps needed to reach the goal state.}
	\label{fig:steps12}
\end{subfigure}
\\
\begin{subfigure}[t]{0.85\textwidth}
	\centering
	\includegraphics[width=\columnwidth]{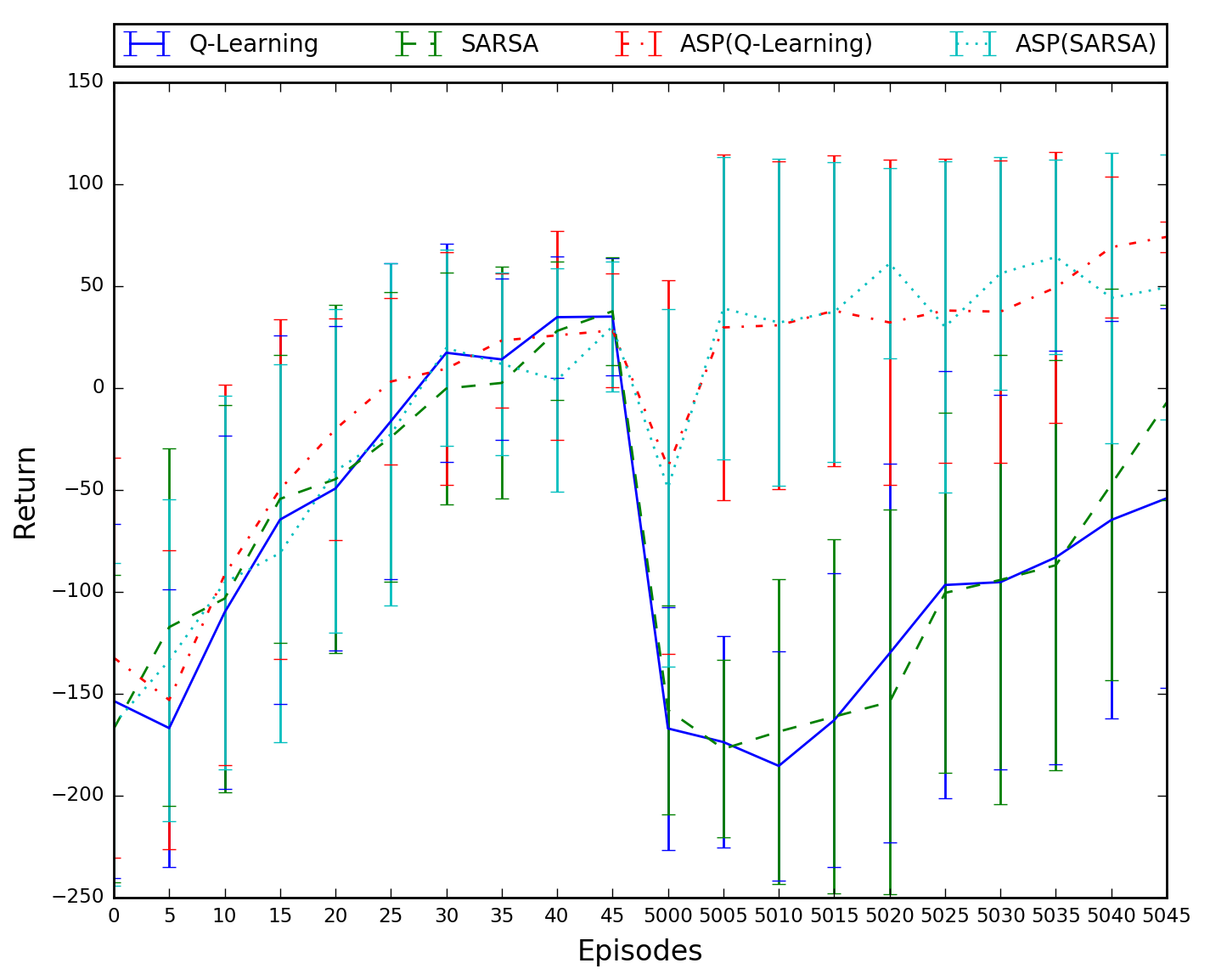}
	\caption{Total returns receive per episode.}
	\label{fig:ret12}
\end{subfigure}
\caption{Results for the first situation.}
\label{fig:resexp1}
\end{figure*}

\begin{figure*}[p!]
\centering
\begin{subfigure}[t]{0.85\textwidth}
	\centering
	\includegraphics[width=\columnwidth]{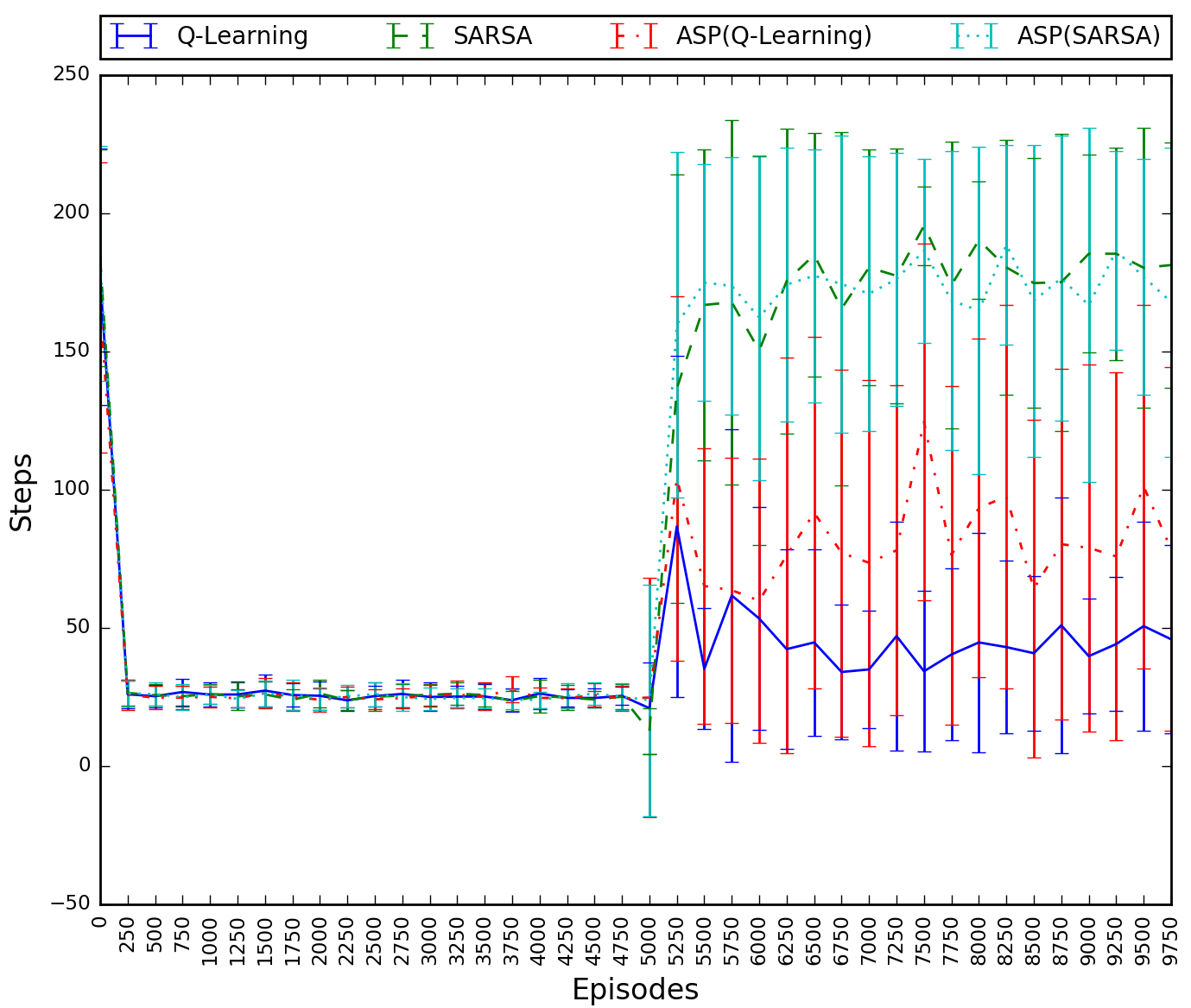}
	\caption{Steps needed to reach the goal state.}
	\label{fig:steps13c}
\end{subfigure}
\\
\begin{subfigure}[t]{0.85\textwidth}
	\centering
	\includegraphics[width=\columnwidth]{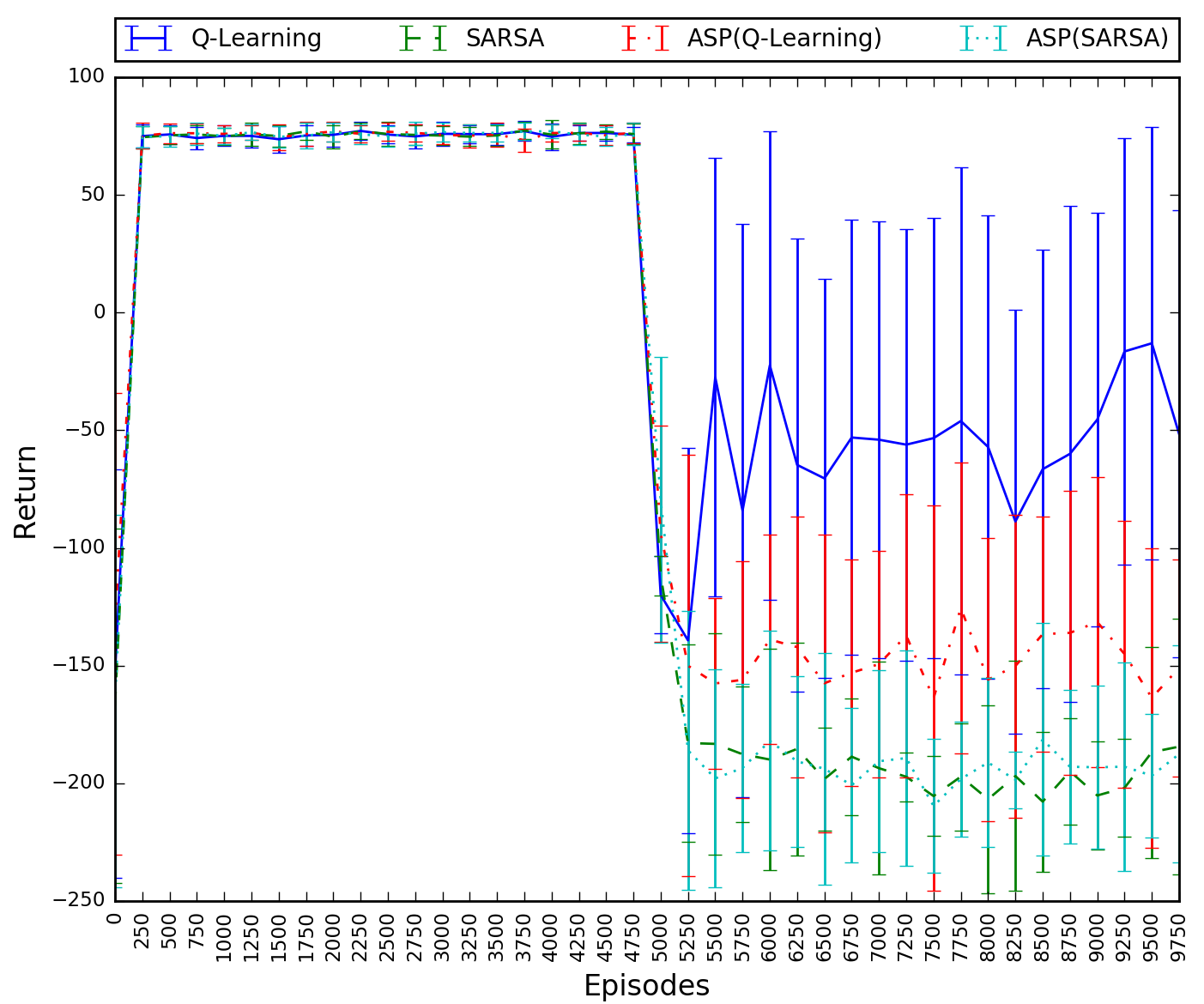}
	\caption{Total returns receive per episode.}
	\label{fig:ret13c}
\end{subfigure}
\caption{Results for the second situation for every episode.}
\label{fig:resexp2c}
\end{figure*}

\begin{figure*}[p!]
\centering
\begin{subfigure}[t]{0.85\textwidth}
	\centering
	\includegraphics[width=\columnwidth]{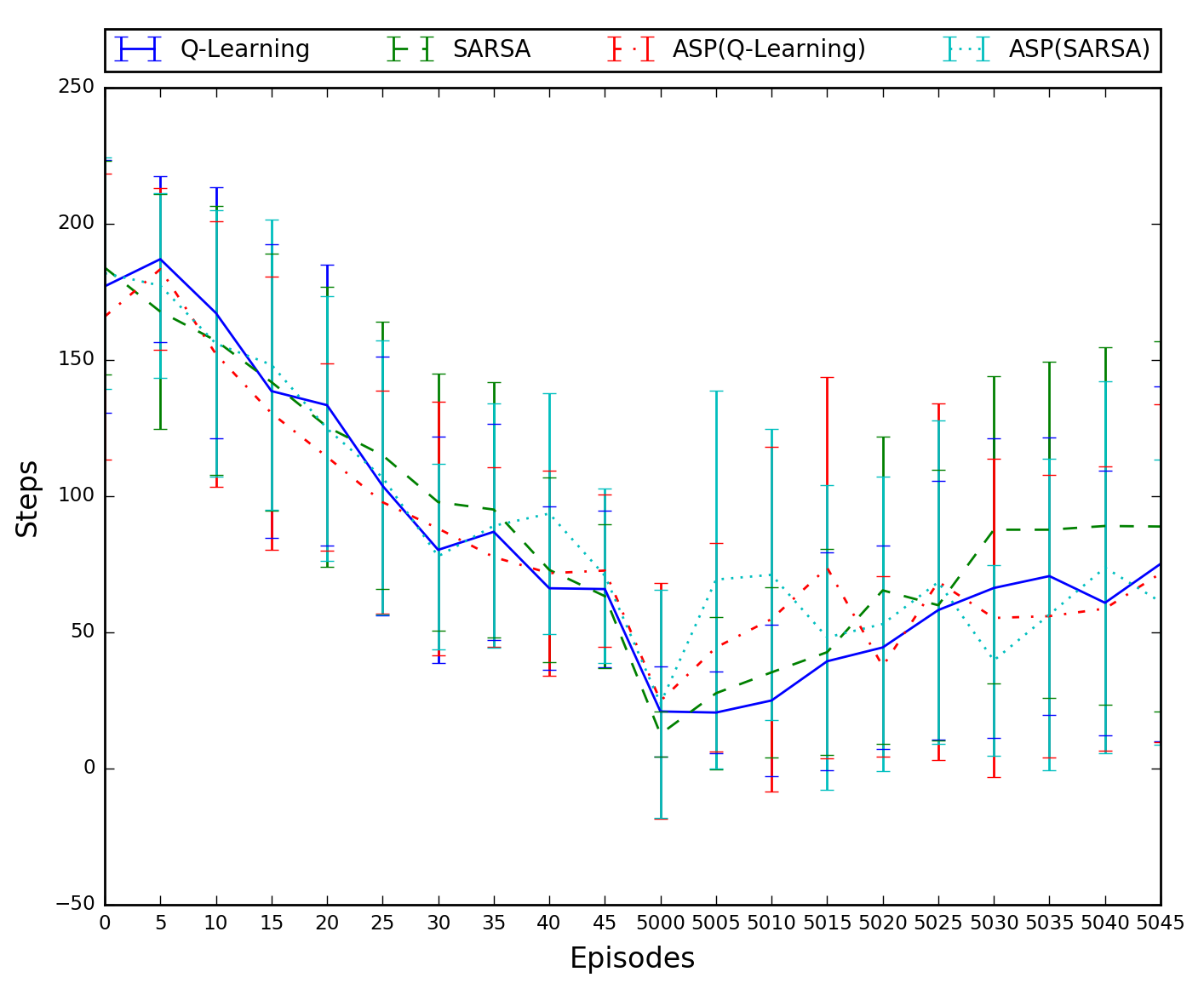}
	\caption{Steps needed to reach the goal state.}
	\label{fig:steps13}
\end{subfigure}
\\
\begin{subfigure}[t]{0.85\textwidth}
	\centering
	\includegraphics[width=\columnwidth]{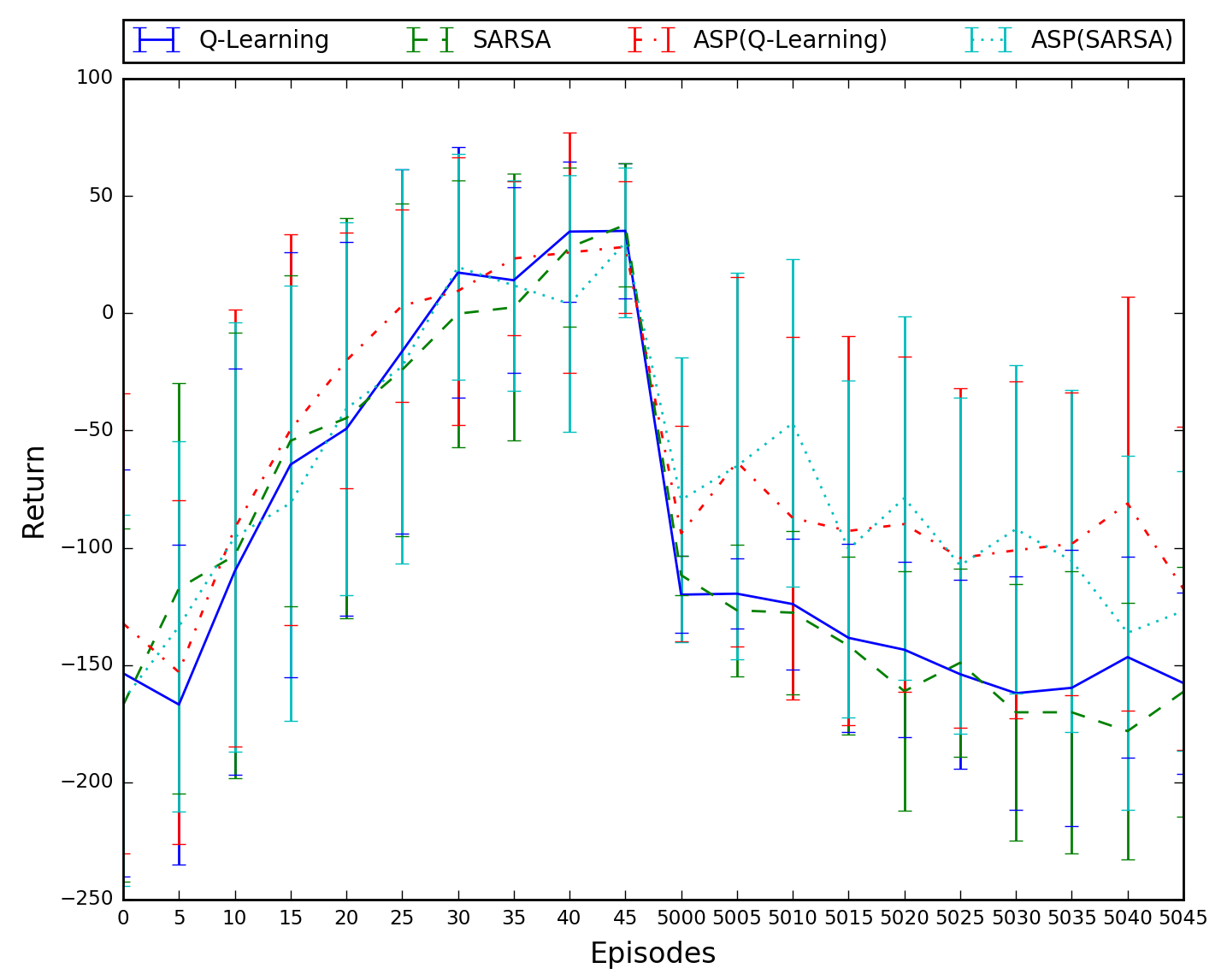}
	\caption{Total returns receive per episode.}
	\label{fig:ret13}
\end{subfigure}
\caption{Results for the second situation.}
\label{fig:resexp2}
\end{figure*}

\begin{figure*}[p!]
\centering
\begin{subfigure}[t]{0.85\textwidth}
	\centering
	\includegraphics[width=\columnwidth]{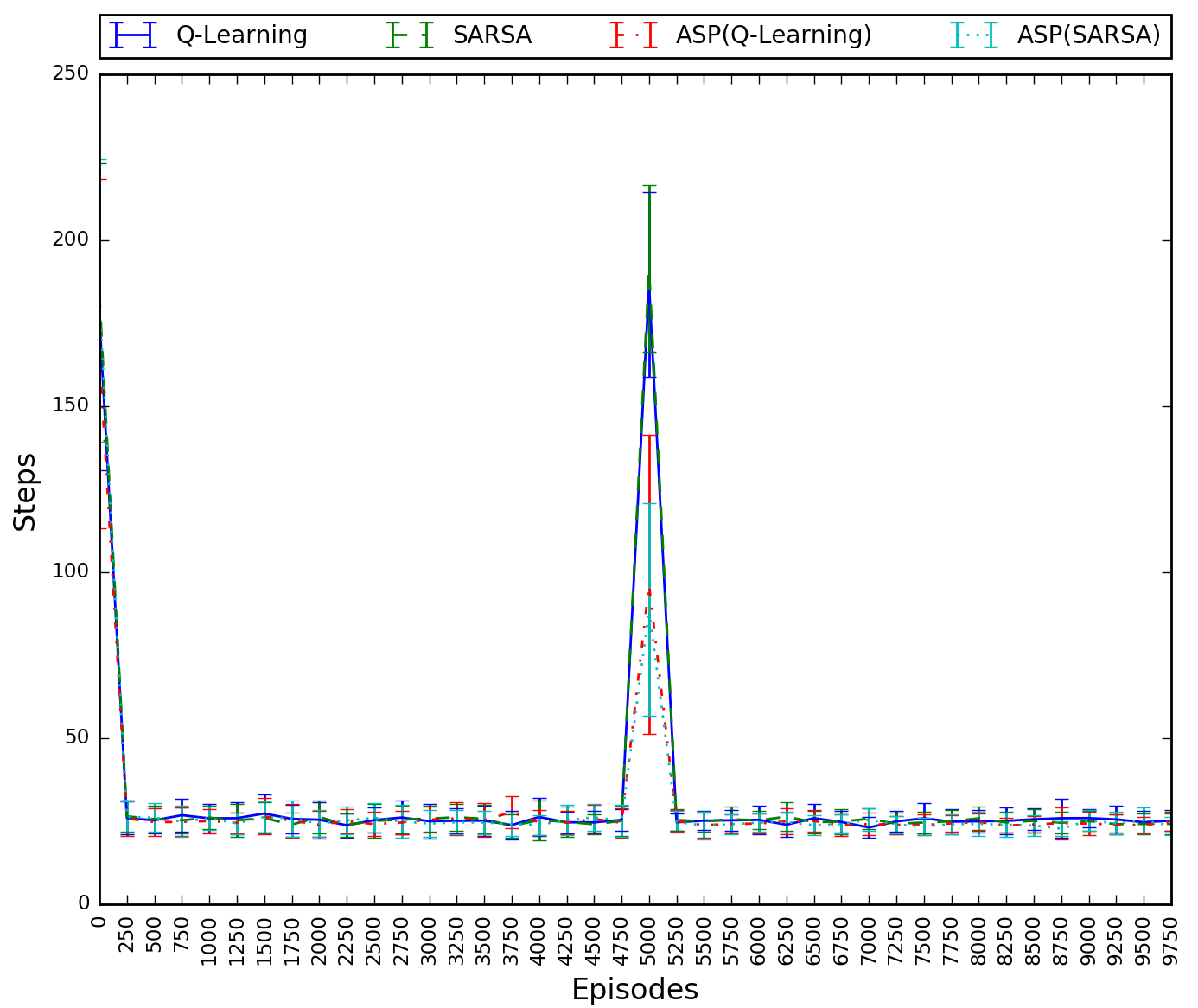}
	\caption{Steps needed to reach the goal state.}
	\label{fig:steps14c}
\end{subfigure}
\\
\begin{subfigure}[t]{0.85\textwidth}
	\centering
	\includegraphics[width=\columnwidth]{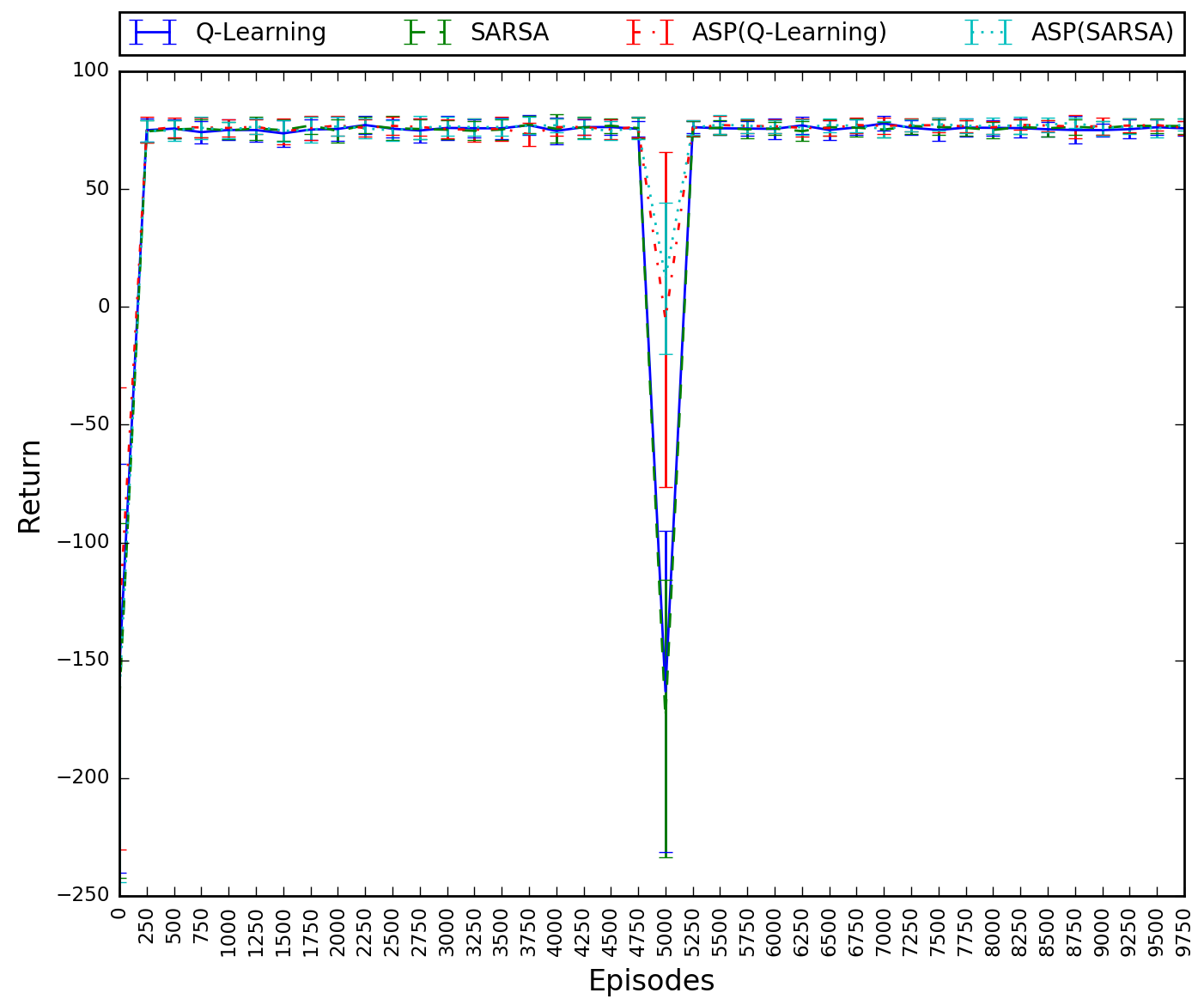}
	\caption{Total returns receive per episode.}
	\label{fig:ret14c}
\end{subfigure}
\caption{Results for the third situation for every episode.}
\label{fig:resexp3c}
\end{figure*}

\begin{figure*}[p!]
\centering
\begin{subfigure}[t]{0.85\textwidth}
	\centering
	\includegraphics[width=\columnwidth]{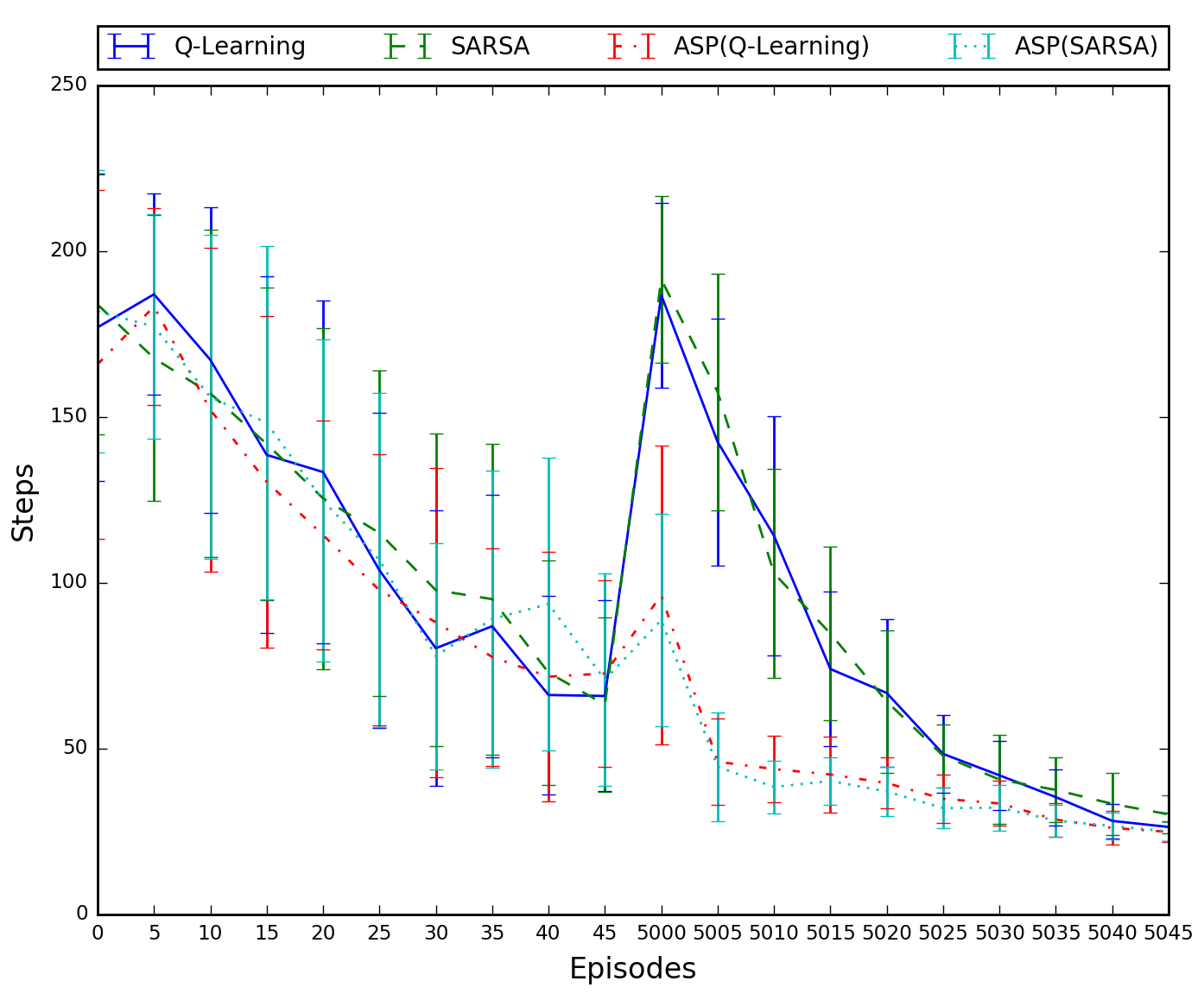}
	\caption{Steps needed to reach the goal state.}
	\label{fig:steps14}
\end{subfigure}
\\
\begin{subfigure}[t]{0.85\textwidth}
	\centering
	\includegraphics[width=\columnwidth]{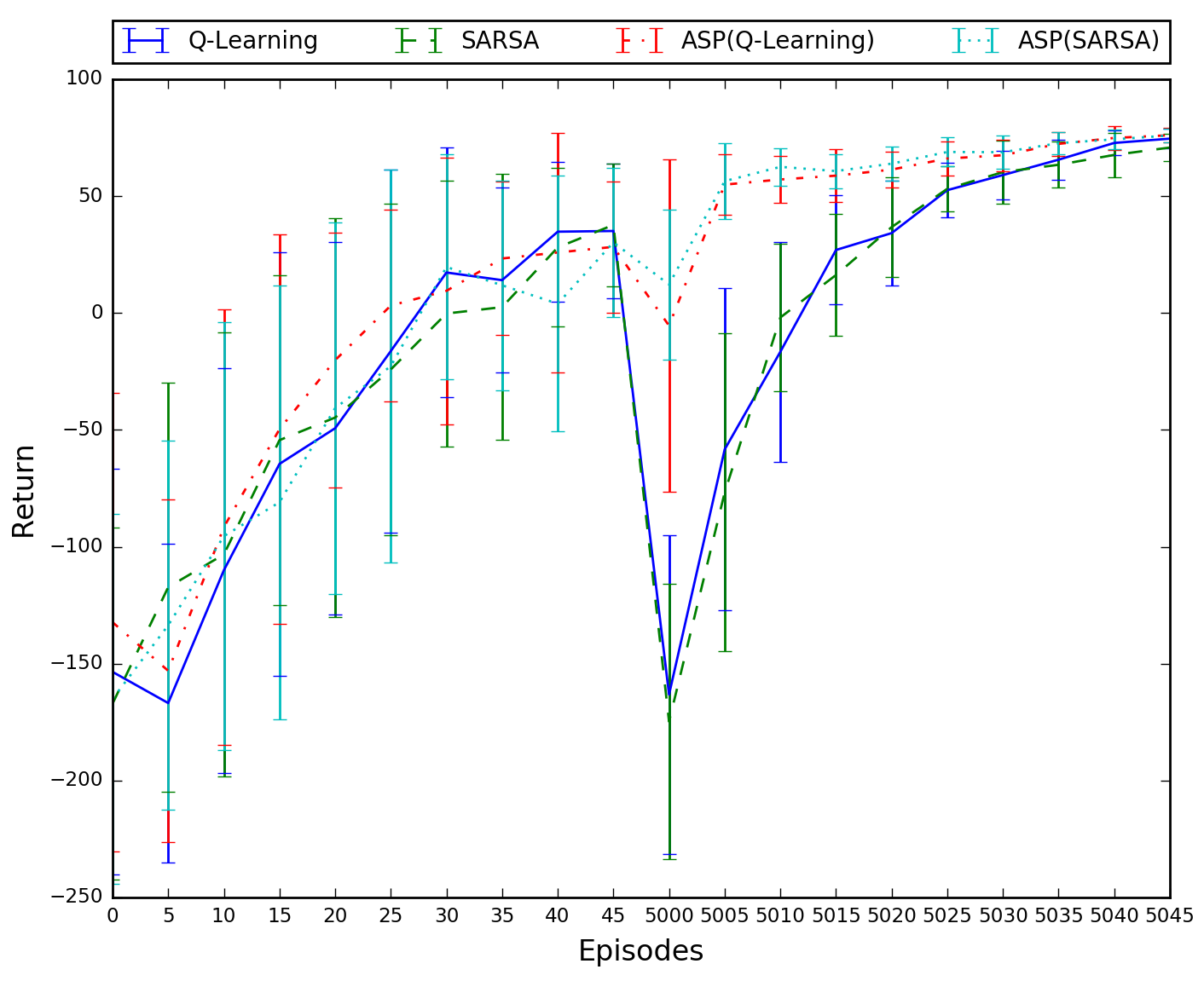}
	\caption{Total returns receive per episode.}
	\label{fig:ret14}
\end{subfigure}
\caption{Results for the third situation.}
\label{fig:resexp3}
\end{figure*}

The graphs in Figures~\ref{fig:resexp1c}, \ref{fig:resexp1}, \ref{fig:resexp2c}, \ref{fig:resexp2}, \ref{fig:resexp3c} and~\ref{fig:resexp3} represent measurements for the four algorithms applied in the three situations considered. Figures~\ref{fig:resexp1}, \ref{fig:resexp2} and~\ref{fig:resexp3} depict the results of the first 50 episodes for the first map and then skipping to the 5000th episode directly, in order to present the measurements after the environment change occurs\footnote{Episodes 51st to 4999th were removed from the figures.}. The respective results for every episode are shown in Figures~\ref{fig:resexp1c}, \ref{fig:resexp2c} and~\ref{fig:resexp3c}.

The results for the number of steps from the first situation are presented in Figure~\ref{fig:steps12}. Figure~\ref{fig:ret12} presents the returns. In the first map (Figure~\ref{fig:map1}) all four algorithms present the same number of steps and returns during the initial 50 episodes shown. After the 5000th episode, the number of steps of \ASPQ{} and \ASPSARSA{} decrease faster than that of Q-Learning and SARSA, while the returns of \ASPQ{} and \ASPSARSA{} increase faster than Q-Learning and SARSA. This difference in the performance of \ASPRL{} and RL algorithms after the change occurs in the environment is due to the fact that \ASPRL{} reuses the \(\vaf(\state,\action)\) function approximated in the previous map.

For the second situation, Figure~\ref{fig:steps13} presents the number of steps and Figure~\ref{fig:ret13} the returns for the four algorithms. Regarding the number of steps, it is possible to notice that although \ASPQ{} and \ASPSARSA{} use the information acquired from previous experience, they still need the same number of steps as Q-Learning and SARSA in all episodes. However, the returns for \ASPQ{} and \ASPSARSA{} are higher than the returns from Q-Learning and SARSA when the change in the map occurs (5000th episode). This similarity in the number of steps for the four algorithms is due to the great change that occurred in the environment, thus \ASPQ{} and \ASPSARSA{} still need to learn interactively with the new environment, even though they use information from the previous map.

The number of steps and returns for the third situation are presented in Figure~\ref{fig:steps14} and~\ref{fig:ret14} respectively. In both returns and steps, when the change occurs, the use of previously learned values enhance the performance of \ASPQ{} and \ASPSARSA{}. While there is a slow decrease in the number of steps in Q-Learning and SARSA and a slow increase in the returns, \ASPQ{} and \ASPSARSA{} can quickly learn the only optimal policy since this policy is already known from previous map.

Experiments were performed in a 1.66GHz Core2Duo with 4GB of RAM running Debian 9 (currently the testing version). Logic programs were written in \(BC+\)~\cite{BCplus2015} and translated to ASP language using \cpasp{}~\cite{babb2013}, which uses \iclingo{}~\cite{ASP2012} to find answer sets. For finding the optimal solution, Q-Learning and SARSA were implemented in Python 3.5 using only built-in libraries. Thirty training sessions were executed for each  algorithm. The same parameters were used in all the experiments: learning rate $\alpha = 0.2$, discount factor $\gamma = 0.9$, exploration/ exploitation rate $ \epsilon = 0.1$ and the Q table was randomly initialised.

\section{Discussion}

The results shown in the previous section, present the best, worst and average cases of the \ASPRL{} method proposed.

The first map (Figure~\ref{fig:map1}) represents the worst case for \ASPRL{}. As can be seen in the graphs in Figures~\ref{fig:resexp1c}, \ref{fig:resexp1}, \ref{fig:resexp2c}, \ref{fig:resexp2}, \ref{fig:resexp3c},\ref{fig:resexp3}, the performance of \ASPQ{} and \ASPSARSA{} are the same as that of Q-Learning and SARSA. This is due to the fact that the reduction in the sets of states and actions are minimal (since there isn't any restriction in this map) and \ASPRL{} methods use the same \(\State\) and \(\Action\) as an RL method.

The best case is represented in the last map (Figure~\ref{fig:map4}). In this case, there is only one feasible policy and, thus, this is the optimal policy. Although the learning process has been executed, in situations like these learning is not necessary, since there is only one feasible policy that is provided by an answer set. 

A similar case occurs when there is no feasible policy. In this situations there is also no need to perform the learning process, since it is already known from the answer sets that there is no feasible/optimal policy and the problem cannot be solved.

The average case is presented in the second and third maps (Figures~\ref{fig:map2} and~\ref{fig:map3} respectively). In these situation it is possible to notice that there is a reduction in the sets of states and actions, along with a reduction in the search space. Nevertheless, the acceleration in the learning process depends on how much the environment has changed from the previous situation. For example, the gain in learning time in the second situation (Figures~\ref{fig:resexp2c} and~\ref{fig:resexp2}) is greater than that of the third situation (Figures~\ref{fig:resexp3c} and~\ref{fig:resexp3}).

\ASPRL{} was not only capable of dealing with non-stationary non-deterministic environments but it also provides the possibility to reduce the search space, thus finding the optimal solution in fewer interactions with the environment than using RL alone. This reduction in the search space is related to the problem that is being solved and not only to the method proposed.

\section{Related Work}\label{sec:rel}

The method proposed in this paper is in line with the work reported in~\cite{Mohan2015IEEE,Mohan2015WHR} where ASP is used to find a description of the domain and RL is applied in the search for the optimal solution. Although both proposals combine similar tools, their use differ. While the present work formalises an MDP from the answer sets, the method proposed in~\cite{Mohan2015IEEE,Mohan2015WHR} finds only one answer set for the problem, where each atom in this set defines a hierarchical POMDP that has to be solved.

A related approach is the combination of ASP with action costs~\cite{khandelwal2014,yang2014}. Although this method also uses a logic program to describe the domain, it uses a method different from RL to find the action costs. At each action that is executed, the agent finds new plans to reach the goal; the update of the state-action pair's value is not based on the Temporal Difference method of RL.

Another work that also deals with sequential decision making is P-Log~\cite{p-log,Baral-PLog} which calculates transition probabilities from sampling the environment, but without considering the cost  of performing an action. The present work differs from P-Log in that our goal is to find the optimal solution regarding not only the transition probabilities, but also the action costs.

Also related to our work is Saturated Path-Constrained MDP (SPC-MDP)~\cite{Kolobov2014}. In a SPC-MDP, a solution is found by a constraint satisfaction procedure. This closely relates to the results obtained with the use of ASP to define the set of states for an MDP as proposed in this paper. However, while the approach described in~\cite{Kolobov2014} uses a Dynamic Programming algorithm to find the solutions, \ASPRL{} uses the interaction with the environment in order to approximate the action-value function in non-stationary decision making problems, which (to the best of our knowledge) has never been attempted before.

Works that are somewhat related to our approach, but can be used when searching for the optimal policy are the ones that deals with changing reward functions, such as \cite{Experts,oMDP,ArbRew}. Since \ASPRL{} uses RL, changes in the reward function are learned by the agent and does not affect the algorithm. Another approach that is somewhat related to \ASPRL{} is hierarchical MDPs (such as the works of \cite{RL-TOPS,AbsBeh}), which can also be incorporated such as the method proposed by \cite{Mohan2015IEEE,Mohan2015WHR} described in the beginning of this section. Although the decomposition proposed by hierarchical MDPs provide more abstraction when searching for the solution, \ASPRL{} deals with changes in \(\State\) and \(\Action\) such as the number of states and actions available in the environment or their representation.

To the best of our knowledge, these are the only works related to our method in which the focus is in the change of the sets of states and actions, not only in the transition and reward functions, nevertheless comparison with these methods is not possible since their goals and results differs from \ASPRL{}.

\section{Conclusion}\label{sec:conc} 

This paper presented a method for efficiently solving non-stationary Markov Decision Processes (MDP). The proposed approach, called \ASPRL{}, uses a combination of Answer Set Programming (ASP) and Reinforcement Learning (RL) in which ASP provides the set of states and actions in domains where unforeseen changes may happen, while RL is used to approximate a value-action function by means of interactions with the environment. In \ASPRL{}, Answer Set Programming is used as a tool for reasoning and knowledge revision and Reinforcement Learning allows for learning the solution of an MDP without the need of an explicit stationary reward function.

Experiments were performed in a changing grid world, whose results show that the use of ASP to find the set of states and actions effectively reduces the search space for finding optimal policies of  Markov Decision Processes in complex domains, as well as in domains that allow only a few possible policies. Not only \ASPRL{} allowed a faster approximation of the action-value function (compared to standard RL algorithms), but the process could continue to interact in a changing environment indefinitely.

\ASPRL{} is capable of dealing with unforeseen changes in the domain, thus solving non-stationary decision making problems. To the best of our knowledge, this has never been accomplished before.

Future work shall be directed towards a full integration of RL into the ASP engine, facilitating the use of ASP  when new states appear in a non-deterministic environment, with the possibility of reviewing the whole set of states seamlessly.

\begin{acknowledgements}
    Leonardo A. Ferreira was partially funded by CAPES. Reinaldo A. C. Bianchi acknowledges the support of FAPESP (grants 2011/19280-8 and 2016/21047-3). Paulo E. Santos acknowledges the support of CNPq (grants 307093/2014-0 and 473989/2013-1). Ramon Lopez de Mantaras acknowledges the support of Generalitat de Catalunya (project 2014-SGR-118) and CSIC (project NASAID 201550E022).
\end{acknowledgements}

\bibliographystyle{spmpsci}


\end{document}